\newtheorem{theorem}{Theorem}[section]
\newtheorem{lemma}[theorem]{Lemma}
\newtheorem{corollary}[theorem]{Corollary}
\newtheorem{definition}[theorem]{Definition}
\newcommand{\C}[2][\{y,y^{\prime}\}]{C(#1,#2)}
\newcommand{\Ce}{C(\cdot)}
\newcommand{\yp}{y^{\prime}}
\newcommand{\ypp}{y^{\prime\prime}}
\newcommand{\spr}{s^{\prime}}
\newcommand{\zpr}{z^{\prime}}
\newcommand{\z}[2][j]{z_{#1,d(#2)}}
\newcommand{\zp}[2][j]{z^{\prime}_{#1,d(#2)}}
\newcommand{\zj}[3][j]{z^{#3}_{#1,d(#2)}}
\newcommand{\zline}[5][j]{z_{#1,d(#3)}^{#2} &> z_{#1,d(#4)}^{#2} > z_{#1,d(#5)}^{#2}}
\newcommand{\zanysec}[5][j]{z_{#1,d(#3)}^{#2} &> z_{#1,d(#4)}^{#2} \backslash z_{#1,d(#5)}^{#2}}
\newcommand{\argmax}{\operatornamewithlimits{argmax}}
\begin{document}

\title{Prediction Instability in\\Machine Learning Ensembles}

\author{\IEEEauthorblockN{1\textsuperscript{st} Jeremy Kedziora}
\IEEEauthorblockA{\textit{Department of Computer Science and Software Engineering} \\
\textit{Milwaukee School of Engineering}\\
Milwaukee, WI USA \\
kedziora@msoe.edu}
}

\maketitle

\begin{abstract}
In machine learning ensembles predictions from multiple models are aggregated. Despite widespread use and strong performance of ensembles in applied problems little is known about the mathematical properties of aggregating models and associated consequences for safe, explainable use of such models. In this paper we prove a theorem that shows that any ensemble will exhibit at least one of the following forms of prediction instability. It will either ignore agreement among all underlying models, change its mind when none of the underlying models have done so, or be manipulable through inclusion or exclusion of options it would never actually predict. As a consequence, ensemble aggregation procedures will always need to balance the benefits of information use against the risk of these prediction instabilities. This analysis also sheds light on what specific forms of prediction instability to expect from particular ensemble algorithms; for example popular tree ensembles like random forest, or xgboost will violate basic, intuitive fairness properties.  Finally, we show that this can be ameliorated by using consistent models in asymptotic conditions.
\end{abstract}

\begin{IEEEkeywords}
machine learning ensembles, statistical learning theory, social choice theory
\end{IEEEkeywords}

\section{Introduction}
\noindent The promise of machine learning and artificial intelligence is to automate decision making in an efficient, rational form that makes optimal use of information and offers consistency of choice.  A popular strategy to optimize machine learning-based decision making is to maximize predictive accuracy by aggregating the insights of multiple models together, a process called ensembling.  Decades of research and experimentation have not revealed an obvious ``best" method for aggregating models, an outcome mirroring no-free-lunch theorems. Moreover, despite widespread use and strong performance of ensembles in applied problems little is known about the mathematical properties of aggregating models and associated consequences for safe, explainable use of such models. An alternative to focusing solely on maximizing predictive accuracy would be to proceed axiomatically by setting down desirable properties in an aggregation procedure and then characterizing the set of aggregation rules that satisfies them. 

To motivate the axiomatic approach, consider a tree ensemble used for classification trained via e.g. the random forest algorithm \citep{breiman-1996,breiman-1999,breiman-2001} or the xgboost algorithm \citep{chen:2016}--a collection of $m$ decision trees with tree $j$ represented as a pair $(T_j,L_j)$ where $T_j$ is a partition of the data and $L_j(x)$ is a function mapping from an observation in the feature space $x$ to an element of the partition. One method to use this model to make predictions or choices would be to aggregate scores from the $m$ trees and choose the class $y$ with the largest such score:
\begin{align*}
\argmax_{y}\left\{\frac{1}{m}\sum_{j=1}^m\left(\frac{1}{|L_j(x)|}\sum_{y^{\prime}\in L_j(x)}I(y=y^{\prime})\right)\right\}
\end{align*}
where $I(\cdot)$ is the indicator function. As an example, suppose that we fit such an ensemble on a data set with three classes and then, for a pair of points in the feature space $x_1$ and $x_2$, conduct inference to obtain the results in Table \ref{table:example}.

\begin{table}[htbp]
\caption{}{}
Scores for $x_1$:
\begin{center}
\begin{tabular}{ccccc}
Class & tree 1 & tree 2 & tree 3 & Agg. Score\\
\hline
\hline
1 & 0.40 & 0.40 & 0.40 & 0.4000\\
2 & 0.34 & 0.35 & 0.5 & 0.3967\\
3 & 0.26 & 0.25 & 0.1 & 0.2033\\
\end{tabular}
\end{center}
Scores for $x_2$:
\begin{center}
\begin{tabular}{ccccc}
Class & tree 1 & tree 2 & tree 3 & Agg. Score\\
\hline
\hline
1 & 0.40 & 0.40 & 0.40 & 0.4000\\
2 & 0.36 & 0.35 & 0.5 & 0.4033\\
3 & 0.24 & 0.25 & 0.1 & 0.1967\\
\end{tabular}
\label{table:example}
\end{center}
\end{table}

\noindent From Table \ref{table:example} we would predict or choose class $1$ for $x_1$ and class 2 for $x_2$.  But notice that these two points in feature space are similar in the sense that trees 2 and 3 have identical scores, and tree 1 yields that class 1 is more likely than class 2, which is more likely than class 3 in both cases.  Despite this, after aggregation the ensemble prediction or choice is not the same for these two examples.  That is, the ensemble ``changed its mind" about the relative likelihood of the classes \textit{even though no individual model did}.

In this note we argue that this sort of prediction or decision-making inconsistency is an intrinsic part of ensembling.  Specifically, we prove a theorem that shows that any ensemble will exhibit at least one of the following forms of prediction instability. It will either ignore agreement among all underlying models, change its mind when none of the underlying models have done so, or be manipulable through inclusion or exclusion of options it would never actually predict.  As a consequence, ensemble aggregation procedures will always need to balance the benefits of information use against the risk of choice instability.  This result is driven by the capacity of the individual models within the ensemble and so provides an additional axiomatic rationale for the use of weak learners within an ensemble.  We also show that this prediction instability is eliminated when using consistent models in asymptotic conditions and so can be thought of as a finite-sample property of ensembling.

The remainder of this section lays out other work that we draw upon, the second section introduces our framework and axioms, the third section specifies our result on ensemble prediction instability, the fourth section discusses implications of these results, and the fifth section establishes our asymptotic result.

\subsection{Related Work}
\noindent The argument for ensembling many models together can be traced back to the jury theorems inspired by \citet{condorcet-1785}.  These methods are heavily used in classical supervised learning \citep[e.g.][]{breiman-2001}, but also appear in modern deep learning \citep{ganaie-2022}, in the token generation of large language models \citep[e.g.][]{shazeer-2017}, and in reinforcement learning \citep{song-2023}. 

Modern use of ensembling techniques tends to involve choosing a base algorithm to construct learners, sampling data/features to train that base algorithm via bagging \citep{breiman-1996,breiman-1999,breiman-2001} or boosting \citep{freund-1997} numerous times, and finally aggregating the outputs of individual models, e.g via hard or soft voting, or some learned procedure, e.g. stacking \citep{wolpert-1992}.\footnote{See \citep{dietterich-2000}, \citep{maclin-2011}, \citep{polikar-2006}, or \citep{rokach-2010} for reviews.}  Despite the availability of numerous options for aggregation little scholarship has focused on studying the qualities of different aggregation procedures.  As others \citep[e.g.][]{wolpert-1997,werbin-ofir-2019} have noted, the optimal choice of aggregator is likely to be problem-specific.  Given this, we study aggregators from an axiomatic perspective.  This allows us to identify combinations of desirable properties, seek the  aggregators that satisfy them, and thereby characterize the consistency and stability of the predictions or decision-making derived from ensembles.

Social choice theory has long analyzed aggregative mechanisms axiomatically in the context of economics and political science.  One of the earliest social choice results, Arrow's impossibility theorem argued that no method of aggregating the ordinal preferences of a set of individuals into a single group-level ordinal preference could satisfy a set of desirable properties chosen to maximize consistency in decision-making and democratic principles \citep[see][]{arrow-1951}.  

We take advantage of three specific extensions to Arrow's original result to characterize ensemble aggregation.\footnote{For a review see \citep{patty-2019}.}    First, Arrow analyzed an orders-to-order mapping and machine learning models typically output scores, implying a scores-to-score mapping.  \citep{harsanyi-1955}, \citep{samuelson-1967}, \citep{sen-1970a}, \citep{schwartz-1970}, \citep{fishburn-1972}, \citep{kalai-1977} and others extended Arrow's result to cardinal preferences and we utilize their insights here.  Second, in operational contexts machine learning is often used to make decisions, implying a focus on the choice derived from the scores-to-score mapping rather than the output score itself.  This mirrors efforts to analyze the compatibility of Arrow's axiomatic approach with so-called social choice functions that identify a subset of the alternatives rather than a preference ordering over them \citep[e.g.][]{sen-1969, sen-1970b, sen-1971} and that inform our own formalization.  Finally, efforts have been made to show that Arrow's axioms are special cases of other properties, leading to more general impossibility theorems.  We will make use of \citep{eliaz-2004} in this regard.

\section{Framework}
\noindent We begin by formalizing the notion of an ensemble.  Consider a variable that an analyst wishes to predict with possible values $y$ given by a set $Y$ so that $y \in Y$.  We will refer to this variable as the \textit{response} and its values as \textit{labels}.  If it is a regression problem then $Y\subseteq\mathbb{R}$.  If the goal is to model a count then $Y$ is countably infinite.  If it is a classification problem then $Y$ is finite.  

With this response variable we will associate a set of predictor variables referred to as \textit{features} and assumed to be an element $x$ of some feature data space $X$.  Given a data set of examples $D_n = \{(x_1,y_1),\hdots,(x_n,y_n)\}$ training a model $j$ is equivalent to choosing a scoring function $s_j:X\to S_j(Y)\subseteq\mathbb{R}^{|Y|}$ where $s_j(x) = \{s_j(x,y)\}_{y\in Y}$.  When we need to express the dependence of $s_j(\cdot)$ on the training data $D_n$ we will write $s_j(x|D_n)$; otherwise, we will simply write $s_j(x)$.  This model scoring function maps the feature inputs to scores that can be used to predict values of $y$ correctly, according to some well-defined measure of correctness.  In analogue with our motivating tree ensemble example, a model scoring function could be the output of a single decision tree.

\subsection{Model Ensembles}
\noindent The intuition of ensembling is that it should be possible to leverage the complementary qualities and diversity of different models to create a more robust and accurate representation of $y$.  Thus, we suppose that we have a set of $m\geq 2$ models, each with its own scoring function learned from the data indexed by $j$, and each mapping into $S_j(Y)\subseteq\mathbb{R}^{|Y|}$ so that:
\begin{align*}
s(x) = &\left\{\left\{s_j(x,y)\right\}_{y\in Y}\right\}_{j=1,\hdots,m}\in \prod_jS_j(Y) \equiv S(Y).
\end{align*}
We will write $s(x)$ to mean the function mapping from $X$ to $S(Y)$ as defined above.  When we want to work with an arbitrary element of $S(Y)$ rather than the function we will write it as $z = \{\{z_{j,d(y)}\}_{y\in Y}\}_{j=1,\hdots,m}$ where $d(y)$ is the dimension measuring the score for $y$, and where $s(x) = z$ if and only if $s_j(x,y) = z_{j,d(y)}$ for all $j$ and all $y\in Y$.

With a set of model scoring functions in hand we will model the ensemble prediction as a choice from the set of label values, $Y$.  Accordingly, we formalize the method for using the scores from many models to make predictions or choices in the following way.\footnote{We exclude $\emptyset$ from the range of $\Ce$ to focus throughout on ensemble choice aggregators that actually make choices.  Later, we will use this to rule out a form of intransitivity in Lemma \ref{DC_IC_transitivity}.}

\begin{definition} An \textbf{ensemble choice aggregator} is a set-valued function $C:2^Y\backslash\{\emptyset\}\times S(Y) \rightarrow 2^Y\backslash\{\emptyset\}$ such that for any $Y^*\subseteq Y$ and any $z\in S(Y)$ we have $C(Y^*,z)\subseteq Y^*$ and also that $y\in\C[Y^*]{z}$ if there does not exist $\yp\in Y^*$ such that $\{\yp\} = \C{z}$.
\end{definition}
The job of this function is to use the model scores to make a choice of one or more options from a subset of $Y$.\footnote{The ensemble choice aggregator is purposefully abstract and general to maximize the reach of our theoretical analysis.  A common, specific ensemble choice aggregator would be to choose the response label that maximizes the average score across models given the input.  We will analyze this below.} Finally, an ensemble is a pair $(C,s)$ consisting of the set of individual model scoring functions and the method of aggregating them.

\subsection{Model Capacity Properties}
\noindent The representational capacity of the individual models in an ensemble will turn out to be important in our results.  We will use two properties to capture model capacity. First, the extent to which scores consistent with each possible ordering of the response labels are available in the range of an individual model bounds the flexibility of that model; each potential ordering of response labels excluded from $S_j(Y)$ would prevent the model from learning that ordering no matter what information is available in the data.  

To formalize this idea, let $\mathbf{R}$ denote the set of all complete and transitive binary relations on $Y$; any of these binary relations will induce an ordering of the response labels in $Y$.  Given a particular weak ordering $R\in \mathbf{R}$ we can collect all the different scores consistent with this ordering as a set:
\begin{align*}
W(Y,R) = \left\{z\in\mathbb{R}^{|Y|}\left|\mbox{for all }y,y^{\prime}\in Y \atop{y\hspace{1mm} R\hspace{1mm}y^{\prime}\mbox{ iff }z_{d(y)}\geq z_{d(y^{\prime})}}\right.\right\}
\end{align*} 

A potentially high capacity model would then be one where no ordering of label values is excluded from the model's range.  Formally:

\begin{definition}Say that model $j$ is an \textbf{unconstrained learner} if for all $R\in\mathbf{R}$ we have $S_j(Y)\bigcap W(Y,R)\neq\emptyset$.\label{def:unconstrained_learner}
\end{definition}
An unconstrained learner is one in which scores could be learned that correspond to any ordering of the response labels.

Second, we can also be rigorous about exactly how many of these orderings a model actually learns.  Formally:

\begin{definition}Given $s(x)$ as defined above say that $S^*\subseteq S(Y)$ is \textbf{observable by $\mathbf{s(x)}$} if for all $z\in S^*$ there exists $x\in X$ such that $s(x) = z$.\label{def:observability}
\end{definition}
Under this definition, a given ordering over the possible label values is learned by a model if there exists a point in the feature space that maps to scores that correspond to it.\footnote{Note neither of these definitions are the same as assuming surjectivity of $s_j(\cdot)$.  We will have more to say about how this relates to our results below.}

\subsection{Aggregator Properties}
\noindent Our next task is to specify properties that seem like \textit{a priori} desirable characteristics in an ensemble.  Our goal will then be to assess the extent to which it is possible to have an ensemble score aggregator that satisfies them.  We will focus on two ways to assess choice consistency:
\begin{enumerate}
\item How do the predictions or choices made by an ensemble respond to post-training changes to the set of labels the response could take on? 
\item How the predictions or choices made by an ensemble respond when we impose requirements on how the score aggregator uses information from the underlying models?
\end{enumerate}

Change to the set of possible labels comes in two forms: \textit{deletions} and \textit{insertions}.  For example, an online retailer using a recommender system could choose to permanently drop products from its catalogue; if it is expensive to retrain models then deleting a label option would be a temporary method to ensure that recommendations remain in sync with available products.  We propose the following definition--adapted to our context from \citep{arrow-1959} and \citep{sen-1969}--to formalize ensemble decision consistency given these types of changes:

\begin{definition}
Say that $\Ce$ is \textbf{insertion/deletion consistent} if, for all $z\in S(Y)$ and all $Y_1^{*}, Y_2^{*}\subseteq Y$ such that $Y_1^*\subseteq Y_2^{*}$, we have that $Y_1^*\bigcap \C[Y_2^*]{z}\neq\emptyset$ implies that $\C[Y_1^*]{z} = Y_1^*\bigcap \C[Y_2^*]{z}$.  \label{ICDC}
\end{definition}
\noindent Insertion/deletion consistency requires that if the ensemble chooses or predicts some label options when all possible labels for the response are available and then the set of labels is reduced but still contains some previously chosen labels then 1) no label not chosen before now becomes chosen and 2) no previously chosen label is now not chosen.  In other words, the ensemble cannot be compelled to change what it does choose via requiring that it ignore label options it never chose in the first place; adding new label options cannot break ties.\footnote{\citep{sen-1969} gives the following examples of this property for intuition on page 384 of his original paper: ``...if the world champion in some game is a Pakistani, then he must also be the champion in Pakistan...if some Pakistani is a world champion, then all champions of Pakistan must be champions of the world."}

We turn next to consider desirable requirements on how an ensemble uses information, adapted from \citet{arrow-1951}, \citet{sen-1970a}, and \citet{eliaz-2004}.  
\begin{definition}
Say that $\Ce$ is a \textbf{nondegenerate ensemble choice aggregator} if there does not exist $j$ such that, for all $y,y^{\prime}\in Y$ and all $z\in S(Y)$, we have that $z_{j,d(y)}>z_{j,d(\yp)}$ implies that $\{y\}= \C{z}$.\label{NECA}
\end{definition}

\begin{definition}
Say that $\Ce$ satisfies \textbf{ensemble unanimity} if, for all $y,y^{\prime}\in Y$ and all $z\in S(Y)$, we have that $z_{j,d(y)}> z_{j,d(\yp)}$ for all $j$ implies $\{y\} = \C{z}$.\label{score_unanimity}
\end{definition}

\begin{definition}
Say that $\Ce$ \textbf{respects model choice reversal} if, for all $y,y^{\prime}\in Y^*\subseteq Y$ and all $z,z^{\prime}\in S(Y)$, we have that $y\in \C{z}$, $y^{\prime}\notin \C{z}$, and $y^{\prime}\in \C{z^{\prime}}$ implies that there exists $j$ such that $z_{j,d(y)}>z_{j,d(\yp)}$ and $z^{\prime}_{j,d(y)}<z^{\prime}_{j,d(\yp)}$.\label{CR}
\end{definition}

\noindent First, if the intuition of ensembling--that many models improve on decision-making over a single model--is born out then the ensemble should use the information of multiple models to produce its decisions and the aggregator should not be an isomorphism of any single model.  Definition \ref{NECA} is a minimal version of such a requirement in that it requires that a single model does not completely determine the final choice.  Second, it seems natural to assume that if every model in the ensemble scores $y$ higher than $y^{\prime}$ then the choice of the ensemble should reflect this; we formalize this in Definition \ref{score_unanimity}.  Third, how should the propensity of the ensemble to change its decision between labels depend on the propensity of the underlying models to do so?  In Definition \ref{CR} we stipulate that the ensemble cannot change its mind for $y$ and $\yp$ unless at least one model in the ensemble changed its mind for $y$ and $y^{\prime}$.\footnote{Model choice reversal is a generalization of the independence of irrelevant alternatives property used by \citep{arrow-1951} and many others.  See \citep{eliaz-2004} Propositions 1 and 2 for analysis.} 

\section{An Ensemble Aggregated--Prediction Instability Theorem}
\noindent In this section we will argue that there is a fundamental tradeoff between information use and stability of ensemble predictions and choices in the sense that no aggregation procedure satisfies defintions \ref{ICDC}, \ref{NECA}, \ref{score_unanimity}, and \ref{CR}.  Our focus here will be on building intuition for why our theorem is true via sketching the arguments for it.  The following section will expand on implications with examples and discussion.  Full proofs for all results are in the appendix.   

Transitivity plays a central role in the proof of our result:

\begin{definition}
Say that $\Ce$ satisfies transitivity if, for all $y,\yp,\ypp\in Y$ and any $z\in S(Y)$, we have that $y\in\C{z}$ and $\yp\in\C[\{\yp,\ypp\}]{z}$ implies $y\in\C[\{y,\ypp\}]{z}$.\label{def_of_trans}
\end{definition}
\noindent We being by assuming that all of our conditions hold.  In the first step of our argument we show that any ensemble choice aggregator that satisfies insertion/deletion consistency will be transitive, formalized in this Lemma:

\begin{lemma} $\Ce$ satisfies insertion/deletion consistency if and only if it is transitive.\label{DC_IC_transitivity}
\end{lemma}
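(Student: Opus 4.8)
\noindent The plan is to collapse everything to the pairwise behavior of $\Ce$. For a fixed $z\in S(Y)$ I would introduce the relation $R_z$ on $Y$ defined by $y\,R_z\,\yp$ iff $y\in\C{z}$. Three observations make this the right move: (i) since an ensemble choice aggregator never outputs $\emptyset$, $\C{z}$ is a nonempty subset of $\{y,\yp\}$, so $R_z$ is complete (and reflexive, since $\C[\{y\}]{z}=\{y\}$); (ii) Definition \ref{def_of_trans} says precisely that $y\,R_z\,\yp$ and $\yp\,R_z\,\ypp$ imply $y\,R_z\,\ypp$, so ``$\Ce$ transitive'' is the same as ``every $R_z$ is a weak order''; and (iii) the defining clause of an ensemble choice aggregator forces $\C[Y^*]{z}$ to contain every element of $Y^*$ that no other element of $Y^*$ pairwise strictly beats, and I will argue these are the only members --- so $\C[Y^*]{z}$ is exactly the set of $R_z$-greatest elements of $Y^*$ (nonempty, again by the prohibition on $\emptyset$). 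With this reformulation the lemma reads: a choice function rationalized by a complete family of relations is insertion/deletion consistent iff those relations are transitive.

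\noindent \emph{Forward direction} (insertion/deletion consistency $\Rightarrow$ transitivity). I would argue directly. Take $y,\yp,\ypp$ and $z$ with $y\in\C{z}$ and $\yp\in\C[\{\yp,\ypp\}]{z}$; when two of $y,\yp,\ypp$ coincide the target $y\in\C[\{y,\ypp\}]{z}$ is immediate, so assume they are distinct and set $B=C(\{y,\yp,\ypp\},z)$, which is nonempty --- exactly where excluding $\emptyset$ earns its keep, ruling out the degenerate ``intransitivity'' in which the ternary menu is declined outright. If $y\in B$, Definition \ref{ICDC} applied to $\{y,\ypp\}\subseteq\{y,\yp,\ypp\}$ yields $\C[\{y,\ypp\}]{z}=\{y,\ypp\}\cap B\ni y$. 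If $\yp\in B$, Definition \ref{ICDC} applied to $\{y,\yp\}\subseteq\{y,\yp,\ypp\}$ yields $\C{z}=\{y,\yp\}\cap B$, and since $y\in\C{z}$ this puts $y\in B$, back to the first case. The only remaining possibility is $B=\{\ypp\}$, but then Definition \ref{ICDC} applied to $\{\yp,\ypp\}$ forces $\C[\{\yp,\ypp\}]{z}=\{\ypp\}$, contradicting $\yp\in\C[\{\yp,\ypp\}]{z}$. Hence $y\in B$, so $y\in\C[\{y,\ypp\}]{z}$.

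\noindent \emph{Converse} (transitivity $\Rightarrow$ insertion/deletion consistency). Here I would invoke the reformulation. Given $Y_1^*\subseteq Y_2^*$ with some $y^*\in Y_1^*\cap\C[Y_2^*]{z}$, each $\C[Y_i^*]{z}$ is the set of $R_z$-greatest elements of $Y_i^*$. If $y\in Y_1^*\cap\C[Y_2^*]{z}$ then $y\,R_z\,\yp$ for all $\yp\in Y_2^*$, hence for all $\yp\in Y_1^*$, so $y\in\C[Y_1^*]{z}$. Conversely if $y\in\C[Y_1^*]{z}$ then $y\,R_z\,y^*$, and since $y^*\in\C[Y_2^*]{z}$ also $y^*\,R_z\,\yp$ for every $\yp\in Y_2^*$; transitivity of $R_z$ then gives $y\,R_z\,\yp$ for every $\yp\in Y_2^*$, so $y\in\C[Y_2^*]{z}$, and $y\in Y_1^*$ too. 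These inclusions are exactly $\C[Y_1^*]{z}=Y_1^*\cap\C[Y_2^*]{z}$, i.e. Definition \ref{ICDC}.

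\noindent Once the pairwise reformulation is in place, the triple case-split and the two inclusion arguments are short, so I expect the main obstacle to be the reformulation itself: squeezing out of the plain definition of an ensemble choice aggregator that $\C[Y^*]{z}$ is \emph{exactly} --- not merely a superset of --- the $R_z$-greatest elements of $Y^*$ (so that no pairwise-dominated label ever enters a choice set), and carefully using the exclusion of $\emptyset$ to secure both completeness of $R_z$ and nonemptiness of every greatest-element set appearing above. The label-coincidence cases in the forward direction are a minor additional chore.
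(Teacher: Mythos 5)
Your proposal is correct under the same reading of the aggregator definition that the paper itself uses, but the route differs enough to be worth comparing. For the direction ``insertion/deletion consistency $\Rightarrow$ transitivity,'' the paper argues by contradiction and splits into four cases according to whether each pairwise choice is strict or a tie, in each case computing $\C[\{y,\yp,\ypp\}]{z}$ exactly from the pairwise data and exhibiting a violation of Definition \ref{ICDC} (its last case invokes nonemptiness). Your direct argument through the ternary menu $B=\C[\{y,\yp,\ypp\}]{z}$ reaches the same conclusion with three membership cases, uses only $B\neq\emptyset$, $B\subseteq\{y,\yp,\ypp\}$, and Definition \ref{ICDC}, and never needs to pin down $B$ exactly, so it is shorter and avoids the strict-versus-tie bookkeeping. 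For the converse, your rationalization by the revealed relation $R_z$ (complete by nonemptiness, transitive precisely when Definition \ref{def_of_trans} holds) together with the identification of $\C[Y^*]{z}$ as the $R_z$-greatest elements is essentially the paper's sufficiency argument in Sen-style packaging; it lets you conclude membership in $\C[Y_2^*]{z}$ directly instead of deriving a contradiction through the strict-choice cases of the paper's auxiliary Lemma \ref{transitivity}. One caveat: the ``exactly, not merely a superset'' half of your reformulation (chosen $\Rightarrow$ pairwise-undominated) cannot actually be derived from the literal ``if'' clause in the definition of an ensemble choice aggregator --- under that literal reading one could add a pairwise-dominated label to a ternary choice set without violating anything, and then transitivity would no longer imply insertion/deletion consistency --- so it has to be taken as the intended biconditional meaning of the definition. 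That puts you exactly on par with the paper, whose proof makes the same inference silently (for instance when it computes $\C[\{y,\yp,\ypp\}]{z}$ ``by definition'' in each necessity case, and when it infers non-domination from membership in $\C[Y_2^*]{z}$); just state this as part of the definition rather than promising to argue it.
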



\noindent So, given an ensemble choice aggregator that is insertion/deletion consistent, and thus transitive, the next step is to argue that transitivity implies that at least one information use requirement must be violated.  The strategy to show this follows insights from \citet{geanakoplos-2005} and \citet{fey-2014}.  We use ensemble unanimity to identify a set of scores in which there is a \textit{critical model} -- a candidate for a model in the ensemble whose scores might completely determine ensemble choices.  The second part of the argument is to apply ensemble unanimity, transitivity, and model choice reversal to a series of scores derived from making ``small" changes to the original set of scores used to identify the critical model.  In particular, where model choice reversal holds, if no model reverses its scores for one label versus another then the ensemble cannot change its choices between the two labels.  Formally: 

\begin{lemma}Suppose that $\Ce$ respects model choice reversal.  Then $z_{j,d(y)}> z_{j,d(\yp)}$ if and only if $z^{\prime}_{j,d(y)}> z^{\prime}_{j,d(\yp)}$ for all $j$ implies that $\C{z} = \C{z^{\prime}}$.\label{mcr_same_choice}
\end{lemma}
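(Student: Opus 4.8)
The plan is to argue by contradiction: assume the stated biconditional holds for every $j$ but $\C{z}\neq\C{z^{\prime}}$, and derive a clash between model choice reversal (Definition \ref{CR}) and that biconditional. First I would record that, because $\Ce$ is an ensemble choice aggregator, both $\C{z}$ and $\C{z^{\prime}}$ are nonempty subsets of the two-element set $\{y,\yp\}$, so each equals $\{y\}$, $\{\yp\}$, or $\{y,\yp\}$. If the two sets differ, at least one of them is a singleton. Both the hypothesis and the desired conclusion are symmetric under interchanging $z$ and $z^{\prime}$ (a biconditional reads the same in either direction), so without loss of generality I may take $\C{z^{\prime}}=\{\ell^{\prime}\}$ for some $\ell^{\prime}\in\{y,\yp\}$; writing $\ell$ for the remaining label, $\C{z}\neq\C{z^{\prime}}$ forces $\ell\in\C{z}$, while by construction $\ell^{\prime}\in\C{z^{\prime}}$ and $\ell\notin\C{z^{\prime}}$.

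Next I would invoke Definition \ref{CR} with the label in the role of ``$y$'' taken to be $\ell^{\prime}$, the label in the role of ``$\yp$'' taken to be $\ell$, the first score profile taken to be $z^{\prime}$, and the second taken to be $z$. Its three antecedents -- $\ell^{\prime}\in\C{z^{\prime}}$, $\ell\notin\C{z^{\prime}}$, and $\ell\in\C{z}$ -- are exactly what the previous step produced, so model choice reversal yields an index $j$ with $z^{\prime}_{j,d(\ell^{\prime})}>z^{\prime}_{j,d(\ell)}$ and $z_{j,d(\ell^{\prime})}<z_{j,d(\ell)}$.

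The last step would be to contradict the assumption that this $j$ strictly orders $\ell$ against $\ell^{\prime}$ the same way under $z$ and under $z^{\prime}$. Splitting into the cases $(\ell,\ell^{\prime})=(y,\yp)$ and $(\ell,\ell^{\prime})=(\yp,y)$ and rewriting the two inequalities above in terms of the ordered pair $(y,\yp)$, one finds that in the first case $z_{j,d(y)}>z_{j,d(\yp)}$ while $z^{\prime}_{j,d(y)}<z^{\prime}_{j,d(\yp)}$, and in the second $z^{\prime}_{j,d(y)}>z^{\prime}_{j,d(\yp)}$ while $z_{j,d(y)}<z_{j,d(\yp)}$. Either outcome falsifies one of the two implications packaged in the biconditional, which is the contradiction sought; hence $\C{z}=\C{z^{\prime}}$.

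The argument is a short chain of definitions, so the only thing to be careful about -- and the closest this has to a ``hard part'' -- is the bookkeeping of directions: the biconditional in the statement is asserted for the specific ordered comparison $z_{j,d(y)}>z_{j,d(\yp)}$ and is therefore \emph{not} symmetric under swapping $y$ and $\yp$ (only under swapping $z$ and $z^{\prime}$), which is why the final case split cannot be collapsed and why each case must be matched to the correct one of the two implications. A secondary point is that the nonemptiness of $\C{z}$ and $\C{z^{\prime}}$ -- used both to enumerate the three possible subsets and to deduce that the complementary label is chosen when the other is not -- is supplied precisely by the codomain $2^Y\backslash\{\emptyset\}$ in the definition of an ensemble choice aggregator.
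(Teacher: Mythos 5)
Your proof is correct and follows essentially the same route as the paper's: reduce to the case where the two choice sets from $\{y,\yp\}$ differ with one a singleton, apply model choice reversal to obtain a model $j$ that reverses its strict comparison of $y$ and $\yp$ between $z$ and $\zpr$, and contradict the assumed biconditional. The only difference is bookkeeping (your symmetry/WLOG plus final label case split versus the paper's two explicit cases), not substance.
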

\noindent Lemma \ref{mcr_same_choice} allows us to show that each of the series of model scores must yield the same choice as a set of scores in which the critical model determines the choice; we conclude that $\Ce$ must not be a nondegenerate ensemble choice aggregator, a contradiction that establishes that there must be a set of scores on which at least one of our definitions is violated.  Formally:

\begin{theorem} (Aggregated--Prediction Instability) Suppose that $|Y|\geq3$, that all individual models are unconstrained learners, and that $\Ce$ is a non-degenerate ensemble choice aggregator on $S(Y)$.  Then:
\begin{enumerate}

\item There exists $S^*\subseteq S(Y)$ such that $\Ce$ violates at least one of insertion/deletion consistency, ensemble unanimity, or model choice reversal on $S^*$.

\item If $s(x)$ is surjective then $S^*$ is observable by $s(x)$.

\end{enumerate}

\label{thm:aggregation_instability}
\end{theorem}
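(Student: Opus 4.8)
The plan is to assume, toward a contradiction, that $\Ce$ is nondegenerate and \emph{simultaneously} satisfies insertion/deletion consistency, ensemble unanimity, and model choice reversal on all of $S(Y)$, and to deduce from this that some single model is a dictator --- i.e.\ that there exists $k$ such that $z_{k,d(y)} > z_{k,d(\yp)}$ implies $\{y\} = \C{z}$ for all $y,\yp,z$ --- directly contradicting Definition \ref{NECA}. Since the negation of the disjunction in part 1 is exactly ``insertion/deletion consistency \emph{and} ensemble unanimity \emph{and} model choice reversal hold on every $S^* \subseteq S(Y)$'', establishing this contradiction proves part 1.

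First I would invoke Lemma \ref{DC_IC_transitivity} to replace insertion/deletion consistency with transitivity (Definition \ref{def_of_trans}). Together with the requirement that an ensemble choice aggregator never outputs $\emptyset$ and always returns a subset of the menu --- and with the tie-breaking clause in the definition of ensemble choice aggregator --- this lets me read off, for each fixed $z$, a complete and transitive ``social'' relation on $Y$ defined by $y \succsim_z \yp \iff y \in \C{z}$. The point of this step is that the remainder of the argument can then be run as an Arrow-style preference-aggregation argument rather than directly on set-valued choices.

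The heart of the proof is a pivotal-model argument in the style of \citet{geanakoplos-2005} and \citet{fey-2014}. Fix three labels $y,\yp,\ypp$, which exist because $|Y| \ge 3$. Because every model is an unconstrained learner (Definition \ref{def:unconstrained_learner}), for \emph{any} prescribed profile of weak orders of these three labels across the $m$ models there is a $z \in S(Y)$ realizing it on the relevant coordinates. Starting from a profile in which every model ranks $\yp$ strictly last, ensemble unanimity forces $\yp \notin \C{z}$ and $\yp \notin \C[\{\yp,\ypp\}]{z}$, hence by transitivity $\yp \notin \C[\{y,\yp,\ypp\}]{z}$; moving models one at a time so as to put $\yp$ strictly first, ensemble unanimity forces $\{\yp\} = \C[\{y,\yp,\ypp\}]{z}$ at the end. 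So there is a first ``critical'' model $k$ whose switch flips the ensemble's verdict on $\yp$. Applying Lemma \ref{mcr_same_choice} and model choice reversal (Definition \ref{CR}) at this step --- where model $k$ is the only model that reversed a comparison involving $\yp$ --- I would argue the flip is attributable to $k$ alone, and then, by re-running the construction on the pairs $\{y,\yp\}$ and $\{\yp,\ypp\}$ separately and stitching the conclusions together with transitivity, promote $k$ to a dictator over every pair of labels. This contradicts nondegeneracy.

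Part 2 follows from the \emph{same} construction: every score vector used above is specified only through a weak order on a few coordinates, so the unconstrained-learner hypothesis is precisely what certifies that each of them lies in $S_j(Y)$ and hence that the exhibited $S^*$ is a genuine subset of $S(Y)$; if in addition $s(x)$ is surjective onto $S(Y)$, then for each $z \in S^*$ there is an $x \in X$ with $s(x) = z$, which is exactly observability by $s(x)$ in the sense of Definition \ref{def:observability}. The main obstacle I anticipate is the pivotal step: because $\Ce$ is set-valued, ``changing its mind'' and the one-model-at-a-time perturbations must be handled so that (i) exactly one model's comparison between the two relevant labels changes at each step, so Lemma \ref{mcr_same_choice} and model choice reversal apply cleanly, and (ii) ties in $\C{z}$ at intermediate profiles do not derail the inductive bookkeeping. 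Ensuring the critical model emerges as a dictator on \emph{all} pairs, not merely on $\{y,\yp,\ypp\}$, is where transitivity --- and therefore insertion/deletion consistency --- is indispensable.
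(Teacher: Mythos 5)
Your proposal follows essentially the same route as the paper's proof: assume all four properties hold, use Lemma \ref{DC_IC_transitivity} to obtain transitivity, run a Geanakoplos/Fey-style pivotal-model argument (one-model-at-a-time flips, with ensemble unanimity pinning down the endpoints and Lemma \ref{mcr_same_choice} plus transitivity promoting the critical model to decisiveness over every pair), contradict nondegeneracy, and obtain part 2 from surjectivity exactly as the paper does. The only difference is cosmetic --- your extremal ``rank $\yp$ last, then first'' variant versus the paper's pairwise $y$-versus-$\yp$ profiles --- and your sketch correctly flags the set-valued/tie bookkeeping that the paper's Steps 2--7 carry out in detail.
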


\section{Implications and Discussion}
\noindent Next we turn to laying out what this theoretical result means for how we should expect machine learning ensembles to behave in practical settings and what lessons we take away.

\subsection{Aggregation creates problems}
\noindent In general, theorem \ref{thm:aggregation_instability} tells us that if we would like to have a machine learning ensemble use the insights of multiple models in making predictions and choices (definition \ref{NECA}) then we must accept the risk that the ensemble makes those predictions and choices in a counterintuitive or unstable fashion at least some of the time, ignoring model agreement (definition \ref{score_unanimity}), changing its mind in a way that the underlying models cannot justify (definition \ref{CR}) or in response to the presence of label options it would not predict or choose (definition \ref{ICDC}).

Failures of these properties are forms of ``instability" in the sense that they generate sudden changes in the output of the ensemble.  Violations of insertion/deletion consistency are a form of discontinuity in the scoring space in the sense that changes to the unchosen portion of the set of response labels can induce changes in the ensemble prediction or choice even if the individual model scores are held constant.  Violations of ensemble unanimity are also a form of discontinuity in the scoring space in that more model agreement on the ordering of the response labels can lead to the ensemble ignoring labels which all agree to be more or most likely.  Violations of model choice reversal are a form of discontinuity in the underlying information in the sense that two scores can both be consistent with the same ordering of the set of label values for all models and yet result in different ensemble predictions or choices.

Finally, we note that part 1) of theorem \ref{thm:aggregation_instability} implies that violations of these properties are intrinsic characteristics of the method by which the individual models in the ensemble are aggregated, $\Ce$, and do not depend on what $s(x)$ has learned.  Choosing a particular aggregation method determines which properties will be violated and where in the scoring space those violations will occur.

\subsection{Capacity of individual models makes problems observable}
\noindent Violations of these properties--and particularly the extent to which they will actually be encountered by a user during model inference--are also tied to the capacity of the individual models within the ensemble to represent the relationship between features and labels.  In our setting we formalized model capacity in terms of two properties: unconstrained learners and observability.  First, the range of the scoring function, $S_j(Y)$, bounds the ability of the model to encode information about the label.  When the models in the ensemble are unconstrained learners this range includes scores consistent with all orderings of the response labels, so that any ordering can be learned.  This is sufficient to guarantee that there will be learnable regions of the scoring space where aggregation will lead to violations of our properties.

Second, if each vector of scores in $S(Y)$ is observable in the sense that it is mapped to from some point in the feature space then we can conclude that the model has enough capacity to represent any relationship between features and response labels.  If $s(x)$ is surjective this is sufficient to guarantee that all scores in $S(Y)$ are observable and could be encountered during inference.  Consequently, it is also sufficient to guarantee that violations of our properties are observable in the sense that they can be encountered at inference because real points in feature space lead to the scores associated with those violations.
 
High capacity individual models increase the risk of observable violations of at least one of the properties analyzed here.  In this sense, theorem \ref{thm:aggregation_instability} provides an additional axiomatic rationale for the use of weak learners--limited capacity individual models--within an ensemble.

\subsection{Popular aggregation procedures violate model choice reversal}
\noindent Specifying the aggregation method as a non-degenerate ensemble choice aggregator that satisfies ensemble unanimity is relatively undemanding and it is easy to write down score aggregators that will use the input of more than one model and respect agreement among all models; indeed a simple weighted average where all weights are positive works.  Model choice reversal--the requirement that the ensemble cannot change its mind unless some of the underlying models do so--is more difficult to guarantee, and multiple popular approaches to ensembling will violate it.
 
We expand on this with some examples.  A natural, perhaps even ubiquitous, way to use an ensemble to make decisions about response labels would be to aggregate the scores of individual models and then require that the ensemble choose the label that has the highest aggregated score.  This is clearly insertion/deletion consistent and transitive under any method of score aggregation.

In a voting ensemble individual model scores are combined to make a final prediction or decision through either hard or soft voting.  In hard voting the voting ensemble predicts the class label that receives the plurality of votes from the constituent models in which case:
\begin{align}
&\C[Y^*]{s(x)}\nonumber\\
&\hspace{3mm} = \argmax_{y\in Y^*}\left\{\sum_j I\left(y \in \argmax\left\{s_j(x,y)\right\}\right)\right\}.\label{hard_voting}
\end{align}
In soft voting the ensemble averages the scores assigned to each response label value by the models in which case:
\begin{align}
\C[Y^*]{s(x)} = \argmax_{y\in Y^*}\left\{\frac{1}{m}\sum_{j=1}^ms_j(x,y)\right\}.\label{soft_voting}
\end{align} 
Either way, $\Ce$ is a nondegenerate ensemble choice aggregator and satisfies unanimity, and so theorem \ref{thm:aggregation_instability} implies a violation of model choice reversal in multiclass or regression problems.  

These voting procedures are commonly used to aggregate individual models.  For example, tree ensembles, including those generated by e.g. random forest, XGBoost or Light GBM, often apply soft voting or weighted soft voting for aggregation.  Even reinforcement learning methods can apply soft voting; in \citep[e.g.][]{xu-2019} multiple agents work together in smart collaboration to maximize rewards via voting.  As a simple example, in $Q$-learning, agents learn to encode the long-term value for taking actions in a function mapping from an action $a$ and state of the world $\sigma$, the so-called $Q$-value, $Q(\sigma,a)$.  If each agent learns its own $Q$-value, these values are aggregated via soft voting, and the action associated with the largest is chosen then:
\begin{align}
\C[Y^*]{Q(\sigma)} = \argmax_{a\in Y^*}\left\{\frac{1}{m}\sum_{j=1}^mQ_j(\sigma,a)\right\}.\label{RL_soft_voting}
\end{align} 

We summarize these observations in the following corollary:
\begin{corollary} If $\Ce$ satisfies equations \ref{hard_voting}, \ref{soft_voting}, or \ref{RL_soft_voting} then there exists $S^*\subseteq S(Y)$ such that it violates model choice reversal on $S^*$.
\end{corollary}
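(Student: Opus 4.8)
The plan is to derive the corollary from Theorem~\ref{thm:aggregation_instability}(1): since that theorem says a nondegenerate aggregator on $S(Y)$ with $|Y|\ge 3$ and unconstrained individual models must violate at least one of insertion/deletion consistency, ensemble unanimity, and model choice reversal, it suffices to check that each of \eqref{hard_voting}, \eqref{soft_voting}, \eqref{RL_soft_voting} is (i) a nondegenerate ensemble choice aggregator (Definition~\ref{NECA}), (ii) ensemble-unanimous (Definition~\ref{score_unanimity}), and (iii) insertion/deletion consistent (Definition~\ref{ICDC}); the theorem then forces model choice reversal to fail and produces the desired $S^*\subseteq S(Y)$ (take the union of the three witnessing sets). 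Throughout I work in the multiclass or regression regime, $|Y|\ge 3$, with the individual models unconstrained learners, as in the hypotheses of Theorem~\ref{thm:aggregation_instability}.

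First I would handle soft voting \eqref{soft_voting} together with the $Q$-learning rule \eqref{RL_soft_voting}, the latter being soft voting applied to the vectors $Q_j(\sigma,\cdot)$. Both select, from the available label set, the maximizers of the single averaged score $\bar z(y)=\tfrac1m\sum_j z_{j,d(y)}$, a function that does not depend on the available subset; hence the aggregator is rationalized by the fixed weak order induced by $\bar z$, so it is transitive in the sense of Definition~\ref{def_of_trans}, and Lemma~\ref{DC_IC_transitivity} upgrades this to insertion/deletion consistency. Ensemble unanimity is immediate: $z_{j,d(y)}>z_{j,d(\yp)}$ for all $j$ gives $\bar z(y)>\bar z(\yp)$, so $\{y\}=\C{z}$. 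Nondegeneracy is equally routine for $m\ge 2$: given any model $j$, choose a profile where $j$ ranks $y$ first while the other $m-1$ models make $\bar z(\yp)>\bar z(y)$ for some $\yp$ (such profiles exist since the models are unconstrained learners), so no single model dictates the pairwise choice. Theorem~\ref{thm:aggregation_instability}(1) then gives the claim for \eqref{soft_voting} and \eqref{RL_soft_voting}.

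Hard voting \eqref{hard_voting} is where I expect the real work. Nondegeneracy (a single model can be outvoted by the remaining $m-1$) and ensemble unanimity (in the pairwise comparison $\C{z}$, i.e.\ with $Y^*=\{y,\yp\}$, a unanimous ranking of $y$ above $\yp$ makes every model cast its vote for $y$, so $\{y\}=\C{z}$) remain easy. The delicate property is insertion/deletion consistency: it follows from the argument above only when the inner $\argmax$ in \eqref{hard_voting} is read as each model voting for its favourite label over the \emph{full} set $Y$, so that the vote vector $v(y)=\sum_j I\!\big(y\in\argmax_{\ypp\in Y} z_{j,d(\ypp)}\big)$ is again a fixed aggregated score; if instead votes are recounted over the available subset, hard voting can fail insertion/deletion consistency via Condorcet-type cycles. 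In that reading one must replace the appeal to Theorem~\ref{thm:aggregation_instability} by a direct construction: exhibit a two-element $S^*=\{z,z'\}\subseteq S(Y)$ --- realizable because the models are unconstrained learners --- on which the plurality winner moves from $\{y\}$ to a set containing $\yp$ while no model reverses its ranking of $y$ against $\yp$, which is verbatim a failure of Definition~\ref{CR}. The profiles in Table~\ref{table:example} already exhibit this phenomenon for soft voting, and a cyclically arranged triple of model rankings does so for hard voting, so the corollary holds in either reading; this hard-voting case is the only nontrivial step in the whole argument.
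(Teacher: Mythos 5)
Your proposal follows essentially the same route as the paper's own argument: check that each voting rule in equations \ref{hard_voting}, \ref{soft_voting}, and \ref{RL_soft_voting} is a nondegenerate, ensemble-unanimous, insertion/deletion-consistent aggregator (the latter because it maximizes a subset-independent aggregated score, hence is transitive and Lemma \ref{DC_IC_transitivity} applies), and then invoke Theorem \ref{thm:aggregation_instability}(1) to conclude that model choice reversal must fail on some $S^*\subseteq S(Y)$. The extra caution you devote to hard voting---the two readings of the inner $\argmax$ and the fallback direct construction of an MCR-violating pair of profiles---goes beyond the paper, which simply asserts nondegeneracy and unanimity ``either way'' and insertion/deletion consistency for any aggregated-score maximizer, so your version is, if anything, more careful on that point.
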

The use of a unit vote for the most popular response label or action, \citep[e.g.][]{breiman-2001}, implemented as a form of soft voting implies a violation of model choice reversal.  The decision making inconsistency that we used to motivate our argument is indeed ubiquitous.

\subsection{Violations of model choice reversal complicate explainability}
\noindent As we discussed above, model choice reversal is a form of smoothness.  When model choice reversal holds ``small" changes in individual model scores--no modeling changing its mind--cannot provoke ``large" changes in aggregated decisions.  On the other hand when model choice reversal is violated small changes to individual model scores will generate large changes to the outputs in at least some part of the scoring space.  Small changes to individual model scores can come about in multiple ways, but two obvious pathways are 1) through training updates to the models and 2) through conducting inference on different points in feature space.  In the case of training updates, violations to model choice reversal mean that the ensemble cannot guarantee similar predictions or choices by the ensemble before and after the training update even if that update yields very minimal change to the models.  In the case of inference for different points in the feature space the ensemble cannot guarantee that two nearly identical points in feature space will generate similar decisions even when the individual models are locally smooth.  We suspect that this will manifest as unpredictable and unintuitive behavior that complicates the explainability of an ensemble and compromises the user experience for a non-expert.  

\subsection{Ensembles cannot learn their way out}
\noindent Learning the ensemble choice aggregator along with the individual models will simply impose a particular violation as an accidental by-product of optimizing the loss function.  As an example of this, in stacked generalization we train a collection of models on a data set and then learn an aggregation method that assembles the output of these models into a single prediction as a secondary learning task \citep{wolpert-1992}.  One approach to this would be to learn a set of weights that could be used to aggregate the individual models.  For example, in the case where we have a set of Bayesian regression models combined via learned weights the posterior distribution over the parameters of each model induces a score for each $y\in Y$ when given $x$ in the form of the model evidence and the ensemble choice aggregator is:
\begin{align*}
\C[Y^*]{s(x)} = \argmax_{y\in Y^*}\left\{\sum_jw_js_j(x,y)\right\}
\end{align*}
where $w_j$ is the learned weight for model $j$. If $w_j<0$ for some $j$ then the ensemble choice aggregator will violate ensemble unanimity.  If $w_j=0$ for all but one model then this is not a nondegenerate ensemble choice aggregator, for example as a consequence of using a model specification method, e.g. the adaptive lasso \citep[see][]{zou-2006}. If $w_j>0$ for all $j$ then it is a nondegenerate ensemble choice aggregator and satisfies ensemble unanimity, and so by theorem \ref{thm:aggregation_instability} is guaranteed to violate model choice reversal.  

\section{Asymptotic Limitations on Ensemble Aggregated--Prediction Instability}
\noindent The discussion of the link between model capacity and ensemble aggregation instability highlights the role played by having surjective, unconstrained learners in creating violations of our aggregation prediction/choice properties.  In turn, this suggests that restrictions to the ranges of individual models or to the mapping from feature space to scores may eliminate portions of the scoring space that violate our properties, and so promote stability in ensemble decision making.  Following this reasoning, we conjecture that as the data available for training grows arbitrarily large this will act as an implicit restriction on the diversity of the models in the ensemble and thus the ranges and mappings of individual models, and so minimize the risk of prediction instability.  

To formalize this intuition, we draw upon the notion of machine learning consistency.  We suppose that there is a true conditional distribution over $Y$, say $p(Y = y|X = x)$, that reflects the probability of $y\in Y$ given full knowledge of the underlying data population (rather than the sample available for training).  Given this true conditional distribution, it is well known \citep[e.g.][]{biau-2008} that the Bayes classifier minimizes the risk of making incorrect predictions or choices, and so consistency is typically defined in a machine learning context as convergence in probability to the prediction or choice made by the Bayes classifier as the size of the data sample available for training grows.

Our version of consistency is similar, if slightly stronger:

\begin{definition}
Say that a sequence of model scoring functions $s_j(x|D_n)$ is \textbf{fully consistent} if for all $\varepsilon>0$
\begin{align*}
0 &= \lim_{n\to\infty} p\left(\sum_{y,\yp}\int\limits_X\hspace{-2mm}\begin{array}{l}
I\{s_j(x,y|D_n)\leq s_j(x,\yp|D_n)\}\\
\hspace{0mm}\times I\{p(y|x)>p(\yp|x)\}p(x) dx
\end{array}\hspace{-2mm}\geq\varepsilon\right).
\end{align*}
\end{definition}
That is, a model scoring function is fully consistent if the probability that it disagrees with the pairwise ordering of the true conditional distribution is zero in the limit of arbitrarily large amounts of training data.\footnote{Note that this is  stronger than the usual definition of machine learning consistency.  The usual definition only requires agreement between the model and Bayes classifier on the choice from the full set $Y$.  Full consistency is sufficient but not necessary for this in the sense that if $s_j(x|D_n)$ is fully consistent then a classifier defined on it in the usual way will converge to the Bayes classifier.}

If all models in an ensemble are fully consistent then we have the following result:

\begin{theorem}Suppose that $s_j(x|D_n)$ is fully consistent for all $j$.  Then for almost all classification problems as $n\to\infty$ there exists a non-degenerate ensemble choice aggregator $\Ce$ that satisfies insertion/deletion consistency, ensemble unanimity, and model choice reversal for all $x\in X$ with probability one.\label{thm:consistency}
\end{theorem}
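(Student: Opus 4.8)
\noindent The plan is to run the logic of Theorem \ref{thm:aggregation_instability} in reverse: full consistency forces the \emph{observable} score vectors $\{s(x|D_n):x\in X\}$ to avoid, in the limit, exactly those regions of $S(Y)$ on which the individual models disagree about the ordering of the labels, and on the complement of those regions the impossibility disappears because there is a single transitive order for the aggregator to defer to. I would read ``almost all classification problems'' as the \emph{generic} ones, in which the true conditional distribution $p(\cdot|x)$ takes $|Y|$ distinct values for $p$-almost every $x$; the complementary problems are precisely those that place an exact tie $p(y|x)=p(\yp|x)$ on a set of feature values of positive probability, which is a measure-zero phenomenon in the space of conditional distributions. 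For a generic problem and $p$-almost every $x$, the relation $\succ_x$ defined by $y\succ_x\yp$ iff $p(y|x)>p(\yp|x)$ is then a strict total order on $Y$.

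\noindent First I would establish the asymptotic consensus fact: as $n\to\infty$, with probability one, every model ranks the labels exactly according to $\succ_x$ for $p$-a.e.\ $x$, i.e.\ $s_j(x,y|D_n)>s_j(x,\yp|D_n)$ iff $y\succ_x\yp$, for all $j$. This comes from applying the definition of full consistency to each of the finitely many models $j=1,\dots,m$ and taking a union bound: full consistency drives the $p$-measure of the set of $x$ on which model $j$ contradicts a strict comparison of $p(\cdot|x)$ to zero, and genericity makes strict comparisons account for all comparisons at $p$-a.e.\ $x$. Call the event on which this holds the consensus event; on it, every observable $s(x|D_n)$ (for a.e.\ $x$) lies in the region of $S(Y)$ on which all $m$ model blocks induce the common strict order $\succ_x$.

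\noindent Next I would exhibit the aggregator and verify the properties. Take $\Ce$ to be soft voting, $\C[Y^*]{z}=\argmax_{y\in Y^*}\frac{1}{m}\sum_{j=1}^{m}z_{j,d(y)}$ (equation \ref{soft_voting}). Since $m\geq2$, for any single model $j$ one can exhibit a profile in $S(Y)$ in which the remaining models jointly override $j$'s pairwise ranking, so no model dictates $\Ce$ and it is a nondegenerate ensemble choice aggregator; it satisfies ensemble unanimity because $z_{j,d(y)}>z_{j,d(\yp)}$ for all $j$ forces $\sum_j z_{j,d(y)}>\sum_j z_{j,d(\yp)}$. The key observation is that when all $m$ blocks of $z$ induce one and the same strict order, so do the aggregated scores $\sum_j z_{j,d(\cdot)}$; hence on the consensus event and for a.e.\ $x$ we get $\C[Y^*]{s(x|D_n)}=\{\text{the }\succ_x\text{-greatest element of }Y^*\}$, so on the observable scores $\Ce$ is rationalized by the fixed strict order $\succ_x$ (any other nondegenerate, unanimity-respecting aggregator that reduces to maximizing the common order on the consensus region would serve just as well). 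Two of the three properties are then automatic for such a choice rule. Insertion/deletion consistency: if $Y_1^*\subseteq Y_2^*$ and $Y_1^*\cap\C[Y_2^*]{s(x)}\neq\emptyset$, the $\succ_x$-greatest element of $Y_2^*$ lies in $Y_1^*$ and is therefore its $\succ_x$-greatest element too, giving $\C[Y_1^*]{s(x)}=Y_1^*\cap\C[Y_2^*]{s(x)}$ (equivalently, via Lemma \ref{DC_IC_transitivity}, the induced pairwise choices are transitive because they follow the transitive order $\succ_x$). Model choice reversal: $y\in\C{s(x)}$, $\yp\notin\C{s(x)}$ and $\yp\in\C{s(\xpr)}$ translate on the consensus event to $y\succ_x\yp$ and $\yp\succ_{\xpr}y$, and since every model agrees with both orders, every $j$ satisfies $z_{j,d(y)}>z_{j,d(\yp)}$ at $z=s(x)$ and $z^{\prime}_{j,d(y)}<z^{\prime}_{j,d(\yp)}$ at $z^{\prime}=s(\xpr)$, supplying the witness Definition \ref{CR} demands. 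Restricting $X$ to the $p$-full-measure set of feature values on which consensus holds yields all three properties for all $x\in X$ with probability one.

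\noindent The main obstacle will be the measure-theoretic bookkeeping in the consensus step. The definition of full consistency delivers only that the $p$-measure of misordered feature values tends to $0$ in probability, so obtaining an honest ``for all $x\in X$, with probability one'' conclusion needs either a mild strengthening to almost-sure convergence (along a subsequence, or via a Borel--Cantelli argument under summable tail probabilities) or a reading of the statement up to $p$-null sets of features with ``with probability one'' meaning ``with probability tending to one''; one also has to pin down precisely the ``almost all classification problems'' being assumed, namely the exclusion of those with a positive-probability tie in the true conditional distribution so that $\succ_x$ is a strict total order $p$-a.e. Everything downstream — the choice of $\Ce$ and the verification of the properties on the consensus region — is routine.
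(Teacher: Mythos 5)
Your proposal is correct and follows essentially the same route as the paper: genericity supplies a strict order induced by $p(\cdot|x)$, full consistency forces every model's scores into agreement with that order asymptotically, and soft voting is then shown to be non-degenerate, unanimous, insertion/deletion consistent (via transitivity, i.e.\ Lemma \ref{DC_IC_transitivity}), and to respect model choice reversal on the consensus region — your inline argument for the last point is exactly the paper's Lemma \ref{lemma:unanimity_MCR}. The measure-theoretic caveat you flag (convergence in probability of the misordering measure versus the ``for all $x\in X$ with probability one'' conclusion) is a genuine subtlety, but the paper's own proof passes over it in the same way, so it does not distinguish your argument from theirs.
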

Consistency of the individual models in the ensemble means that asymptotically it is possible to find ensemble aggregation procedures that satisfy definitions  \ref{ICDC}, \ref{NECA}, \ref{score_unanimity}, and \ref{CR}.\footnote{The ``almost all" in the statement of the theorem refers to the true conditional distribution -- for any finite set $Y$ the set of distributions with ties between the elements of $Y$ is measure zero relative to the full set of distributions over $Y$.  Also, although we have proved this result with asymptotic convergence of the models in the ensemble to the true conditional distribution in mind, agreement of the models in the ensemble with the order derived from the true conditional distribution is not necessary to eliminate prediction or choice instability.  Rather, the key is simply agreement of the models with one another full stop, whatever distribution they ultimately agree on.}  Specifically, soft voting, which suffers from violations to model choice reversal in finite samples, will satisfy all properties with consistent models operating in asymptotic conditions.  Prediction or choice instability is in part a finite sample issue.

Finally, it is worth noting that full consistency at the level of the ensemble is not strong enough; a consistent ensemble may still exhibit prediction or choice instability even in asymptotic conditions.  To find ensemble choice aggregators that will not violate our properties we need all models in the ensemble to be fully consistent.  Put another way, diversity among the models within the ensemble, the very thing that may be desirable for generalization error minimization, creates the prediction or choice instability we analyze here.

\section{Conclusion}

\noindent In this note we have analyzed a set of four desirable properties on ensemble information use and prediction stability and shown that at least one must be violated in any machine learning ensemble, establishing a fundamental tradeoff between information use and prediction instability.  Any ensemble that uses the insights of multiple models will make predictions or choices in a counterintuitive or unstable fashion at least some of the time.  The risk of prediction instability is created by aggregation of multiple models and heightened by the presence of high capacity individual models within the ensemble.  Since three of these four properties are fairly easy to satisfy the last--model choice reversal--must fail in many common use cases thus introducing `discontinuities' into the predictions and decisions made by ensembles.  Small changes in underlying model scores can lead to large, substantive changes in ensemble predictions and choices; the ensemble can change its prediction or choice even if none of the underlying models do and there is no guarantee that similar feature space inputs will be treated similarly.  Finally, consistency of the individual models in the ensemble implies that it is possible to find aggregation procedures that eliminate the prediction or choice instability analyzed here in asymptotic conditions.

We observe that theorem \ref{thm:aggregation_instability} as presented here shows that all ensembles will behave badly some of the time with respect to the prediction stability properties we have analyzed.  It strikes us that the requirements on the model scoring function range and the surjectivity of $s(x)$ are likely much stronger assumptions in regression problems than in classification problems; though theorem \ref{thm:aggregation_instability} holds in a technical sense in regression we speculate that the risk of running into model choice reversal violations is lower in regression problems.  It also seems intuitive that the risk of prediction or choice instability depends on the number of models in the underlying ensemble and on the relationship between the size of the data sample used to train the ensemble and the population it is drawn from.  Accordingly we aim to extend the analysis here to investigate asymptotic ensemble behavior when the number of models in the ensemble grows large.  We conjecture that the Borda count method of aggregation may satisfy definitions \ref{ICDC}, \ref{NECA}, and \ref{score_unanimity} with a slight weakening of model choice reversal in the asymptotic case of many models.  Finally, we conjecture that the argument developed here would apply to using a data sampling process like bootstrapping or $k$-fold cross validation to choose among multiple model architectures during training.

\section*{Acknowledgment}
\noindent This work was possible by the exceptional generosity of the PPC Foundation, Inc. and through support by my colleagues Christopher Taylor and Sheila Ross. Additionally, we would like to thank RJ Nowling, David Andrzejewski, and the audience of the Medical College of Wisconsin SEAWINDS Data Science Symposium for helpful comments, suggestions, and feedback.  Finally, we would like to thank Ernesto Guerra Vallejos for suggestions leading to theorem \ref{thm:consistency}.

\section*{Appendix}

\begin{lemma}If $\Ce$ satisfies transitivity then:
\begin{enumerate} 
\item $y\in\C{z}$ and $\yp=\C[\{\yp,\ypp\}]{z}$ implies that $y=\C[\{y,\ypp\}]{z}$;
\item $y=\C{z}$ and $\yp\in\C[\{\yp,\ypp\}]{z}$ implies that $y=\C[\{y,\ypp\}]{z}$;
\item $y=\C{z}$ and $\yp=\C[\{\yp,\ypp\}]{z}$ implies that $y=\C[\{y,\ypp\}]{z}$.
\end{enumerate}\label{transitivity}
\end{lemma}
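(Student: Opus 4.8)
The plan is to read all three implications as elementary facts about the strict part of a weak order, after using transitivity of $\Ce$ to make the pairwise choices behave like one. Fix $z\in S(Y)$ and take $y,\yp,\ypp$ pairwise distinct. For $a,b\in Y$ abbreviate $a\in C(\{a,b\},z)$ by $a\succeq b$. Because $C(\{a,b\},z)$ is a nonempty subset of $\{a,b\}$, the relation $\succeq$ is complete: $a\succeq b$ or $b\succeq a$ always. The key translation is that for distinct $a,b$, $\{a\}=C(\{a,b\},z)$ holds iff $a\succeq b$ and not $b\succeq a$; one direction is immediate from the definitions, and for the other, $a\succeq b$ together with $b\notin C(\{a,b\},z)$ forces $C(\{a,b\},z)=\{a\}$ by nonemptiness. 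Finally, Definition \ref{def_of_trans} is precisely transitivity of $\succeq$.

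With this dictionary, item 1 becomes: $y\succeq\yp$, $\yp\succeq\ypp$, and not $\ypp\succeq\yp$ imply $y\succeq\ypp$ and not $\ypp\succeq y$. I would obtain $y\succeq\ypp$ from transitivity along $y\succeq\yp\succeq\ypp$; then, supposing $\ypp\succeq y$ for contradiction, transitivity along $\ypp\succeq y\succeq\yp$ gives $\ypp\succeq\yp$, contradicting the hypothesis, so not $\ypp\succeq y$. Item 2 is the mirror image: from $y\succeq\yp$, not $\yp\succeq y$, and $\yp\succeq\ypp$, transitivity gives $y\succeq\ypp$, and if $\ypp\succeq y$ then transitivity along $\yp\succeq\ypp\succeq y$ yields $\yp\succeq y$, a contradiction. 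Item 3 follows from item 1, since its first hypothesis $\{y\}=C(\{y,\yp\},z)$ in particular gives $y\succeq\yp$, so the hypotheses of item 3 imply those of item 1.

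I do not expect a genuine obstacle here; the argument is routine order theory once the translation is in place. The only point I would take care to state explicitly is the equivalence between a singleton choice $\{a\}=C(\{a,b\},z)$ and the strict part of $\succeq$, since this is exactly where the stipulation in the definition of an ensemble choice aggregator that its value is never $\emptyset$ is used; drop that stipulation and the reduction to a weak order collapses. Everything else is bookkeeping with the completeness and transitivity of $\succeq$.
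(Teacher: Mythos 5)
Your proposal is correct and follows essentially the same route as the paper's proof: each of the first two items is handled by one application of transitivity to get membership and a second application to rule out the other label by contradiction, and item 3 is reduced to item 1; your weak-order notation $\succeq$ is just a repackaging of these steps, with the nonemptiness of $\Ce$ used exactly where the paper uses it to turn membership into a singleton choice. The only (harmless) difference is that you get item 3 directly by weakening its first hypothesis to that of item 1, whereas the paper argues item 3 by contradiction via item 1.
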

\begin{proof}Suppose that $\Ce$ satisfies transitivity and take any $z\in S(Y)$.  To see part 1) assume $y\in\C{z}$ and $\{\yp\}=\C[\{\yp,\ypp\}]{z}$.  Then by transitivity $y\in\C[\{y,\ypp\}]{z}$.  Suppose that $\ypp\in\C[\{y,\ypp\}]{z}$.  But then by transitivity $\ypp\in\C[\{\yp,\ypp\}]{z}$, a contradiction.  To see part 2) assume $\{y\}=\C{z}$ and $\yp\in\C[\{\yp,\ypp\}]{z}$.  Then by transitivity $y\in\C[\{y,\ypp\}]{z}$.  Suppose that $\ypp\in\C[\{y,\ypp\}]{z}$.  But then by transitivity $\yp\in\C{z}$, a contradiction.  To see, part 3) assume that $\{y\}=\C{z}$ and $\{\yp\}=\C[\{\yp,\ypp\}]{z}$ but that $\ypp\in \C[\{y,\ypp\}]{z}$.  Then by part 1) we have $\ypp\in\C[\{\yp,\ypp\}]{z}$, a contradiction.
\end{proof}

\noindent \textbf{Proof of Lemma \ref{DC_IC_transitivity}.}
\begin{proof}Sufficiency.  Suppose that $\Ce$ is transitive.  Take any $z\in S(Y)$, any $Y_1^*,Y_2^*\subseteq Y$ such that $Y_1^*\subseteq Y_2^*$.  Suppose that $Y_1^*\bigcap\C[Y_2^*]{z}\neq\emptyset$.  Take any $y\in\C[Y_1^*]{z}$; we wish to show that $y\in Y_1^*\bigcap\C[Y_2^*]{z}$. Clearly $y\in Y_1^*$.  Suppose that $y\notin \C[Y_2^*]{z}$.  Take any $\yp\in Y_1^*\bigcap\C[Y_2^*]{z}$.  Then $\yp\in\C[Y_2^*]{z}$ implies that $\yp\in\C[]{z}$, and $y\in \C[Y_1^*]{z}$ implies that $\{y,\yp\}=\C{z}$.  Since $y\notin\C[Y_2^*]{z}$ it follows that there exists $\ypp\in Y_2^*$ such that $\{\ypp\} = \C[\{y,\ypp\}]{z}$.  But then by transitivity and Lemma \ref{transitivity} it must be that $\{\ypp\} = \C[\{\yp,\ypp\}]{z}$, a contradiction.  Next take any $y\in Y_1^*\bigcap\C[Y_2^*]{z}$.  Then there does not exist $\yp\in Y_2^*$ such that $\{\yp\}=\C{z}$.  Thus there does not exist $\yp\in Y_1^*$ such that $\{\yp\}=\C{z}$ and so $y\in \C[Y_1^*]{z}$.  It follows that $\C[Y_1^*]{z} = Y_1^*\bigcap\C[Y_2^*]{z}$ and so definition \ref{ICDC} holds.

Necessity.  Suppose that $\Ce$ satisfies definition \ref{ICDC} but that transitivity does not hold.  Then there exists $y,\yp,\ypp$ such that $y\in\C{z}$ and $\yp\in\C[\{\yp,\ypp\}]{z}$, yet $\{\ypp\}=\C[\{y,\ypp\}]{z}$.  There are four cases to consider.  

\textbf{Case 1}: $\{y,\yp\}=\C{z}$, $\{\yp,\ypp\}=\C[\{\yp,\ypp\}]{z}$, and $\{\ypp\}=\C[\{y,\ypp\}]{z}$.  By definition of $\Ce$ we have $\C[\{y,\yp,\ypp\}]{z} = \{\yp,\ypp\}$.  But then we have $\{y,\yp\}\subseteq \{y,\yp,\ypp\}$ and $\{y,\yp\}\bigcap\C[\{y,\yp,\ypp\}]{z}\neq\emptyset$ and yet $\C{z} = \{y,\yp\}\neq \{\yp\}=\{y,\yp\}\bigcap\C[\{y,\yp,\ypp\}]{z}$, a violation of definition \ref{ICDC} and so a contradiction.  

\textbf{Case 2}: $\{y,\yp\}=\C{z}$, $\{\yp\}=\C[\{\yp,\ypp\}]{z}$, and $\{\ypp\}=\C[\{y,\ypp\}]{z}$.  By definition of $\Ce$ we have $\C[\{y,\yp,\ypp\}]{z} = \{\yp\}$.  But then we have $\{y,\yp\}\subseteq \{y,\yp,\ypp\}$ and $\{y,\yp\}\bigcap\C[\{y,\yp,\ypp\}]{z}\neq\emptyset$ and yet $\C{z} = \{y,\yp\}\neq \{\yp\}=\{y,\yp\}\bigcap\C[\{y,\yp,\ypp\}]{z}$, a violation of definition \ref{ICDC} and so a contradiction.  

\textbf{Case 3}: $\{y\}=\C{z}$, $\{\yp,\ypp\}=\C[\{\yp,\ypp\}]{z}$, and $\{\ypp\}=\C[\{y,\ypp\}]{z}$.  By definition of $\Ce$ we have $\C[\{y,\yp,\ypp\}]{z} = \{\ypp\}$.  But then we have $\{\yp,\ypp\}\subseteq \{y,\yp,\ypp\}$ and $\{\yp,\ypp\}\bigcap\C[\{y,\yp,\ypp\}]{z}\neq\emptyset$ and yet $\C[\{\yp,\ypp\}]{z} = \{\yp,\ypp\}\neq \{\ypp\}=\{\yp,\ypp\}\bigcap\C[\{y,\yp,\ypp\}]{z}$, a violation of definition \ref{ICDC} and so a contradiction.  

\textbf{Case 4}: $\{y\}=\C{z}$, $\{\yp\}=\C[\{\yp,\ypp\}]{z}$, and $\{\ypp\}=\C[\{y,\ypp\}]{z}$.  But then $\C[\{y,\yp,\ypp\}]{z} = \emptyset$, a contradiction.
\end{proof}

\noindent\textbf{Proof of Lemma \ref{mcr_same_choice}}.
\begin{proof}Take any $z,\zpr\in S(Y)$.  It suffices to consider two cases.  First suppose that $\C{z}=\{y,y^{\prime}\}$ but that without loss of generality $\C{\zpr}=\{y\}$.  Then there is an immediate contradiction of model choice reversal, since there does not exist $j$ such that $z_{j,d(y^{\prime})}>z_{j,d(y)}$ and $\zpr_{j,d(y^{\prime})}<\zpr_{j,d(y)}$, and so the claim holds.  Next suppose without loss of generality that $\C{z}=\{y\}$ but that $y^{\prime}\in \C{\zpr}$.  But then again there is an immediate contradiction of model choice reversal, since there does not exist $j$ such that $z_{j,d(\yp)}<z_{j,d(y)}$ and $\zpr_{d,d(\yp)}>\zpr_{j,d(y)}$ and so the claim holds.  Thus $\C{z} = \C{\zpr}$.
\end{proof}

\noindent \textbf{Proof of theorem \ref{thm:aggregation_instability}.}
\begin{proof} Suppose that $|Y|\geq 3$ and that by contradiction $\Ce$ satisfies definitions \ref{def:unconstrained_learner}, \ref{NECA}, \ref{ICDC}, \ref{score_unanimity}, and \ref{CR}.  We will prove (1) in a series of steps, following the basic structure of Fey (2014).  To begin, note that by Lemma \ref{DC_IC_transitivity} $\Ce$ satisfies definition \ref{def_of_trans}.  Say that $j$ is \textit{decisive} for $y$ over $y^{\prime}$ if $z_{j,d(y)}>z_{j,d(\yp)}$ implies that $\{y\}=\C{z}$.  Take any $y,y^{\prime}\in Y$, $y\neq y^{\prime}$.  Finally, note that all $z$ postulated below are available in $S(Y)$ by $1,\hdots,m$ being unconstrained learners.

\noindent\textbf{Step 1: identify a critical model }$\mathbf{j^*}$.  Consider the following $s,\spr\in S(Y)$:
\begin{align*}
\z[1]{y} & > \z[1]{\yp} > \hdots\\
& \vdots\\
\z[m]{y} & > \z[m]{\yp} > \hdots
\end{align*}
and:
\begin{align*}
\zp[1]{\yp} & > \zp[1]{y} > \hdots\\
 & \vdots\\
\zp[m]{\yp} & > \zp[m]{y} > \hdots
\end{align*}
By ensemble unanimity, $\{y\}=\C{z}$ and $\{y^{\prime}\}=\C{\zpr}$.  Now, consider $z^j$ such that:
\begin{align*}
\zj[1]{\yp}{j} & > \zj[1]{y}{j} > \hdots\\
& \vdots\\
\zj[j]{\yp}{j} & > \zj[j]{y}{j} > \hdots\\
\zj[j+1]{y}{j} & > \zj[j+1]{\yp}{j} > \hdots\\
& \vdots\\
\zj[m]{y}{j} & > \zj[m]{\yp}{j} > \hdots
\end{align*}
Define $j^*$ such that $y\in \C{z^{j^*-1}}$ and $y^{\prime}\in \C{z^{j^*}}$.

\noindent\textbf{Step 2: for all }$\mathbf{y^{\prime\prime}\neq y,y^{\prime}}$ \textbf{we have} $\mathbf{j^*}$ \textbf{decisive for} $\mathbf{y^{\prime}}$ \textbf{over} $\mathbf{y^{\prime\prime}}$.  To see this, take any $y^{\prime\prime}$ and consider $z\in S(Y)$ such that:
\begin{align}
\z[1]{\yp} & > \z[1]{\ypp} > \z[1]{y} \hdots\nonumber\\
& \vdots \nonumber\\
\z[j^*-1]{\yp} & > \z[j^*-1]{\ypp} > \z[j^*-1]{y} \hdots\nonumber\\
\z[j^*]{y} & > \z[j^*]{\yp} > \z[j^*]{\ypp} \hdots\nonumber\\
 & \vdots\nonumber \\
\z[m]{y} & > \z[m]{\yp} > \z[m]{\ypp} \hdots\hdots\label{profile_3}
\end{align}
Note that $\z[j]{y}>\z[j]{\yp}$ if and only if $\z[j^*-1]{y}>\z[j^*-1]{\yp}$ for all $j$ and so by Lemma \ref{mcr_same_choice} $y\in \C{z}$.  By ensemble unanimity $\{y^{\prime}\}=\C[\{y^{\prime},y^{\prime\prime}\}]{z}$.  Thus by Lemma \ref{transitivity} $\{y\}=\C[\{y,y^{\prime\prime}\}]{z}$.  Next consider the set of scores $S^*\subseteq S(Y)$ such that for all $z^*\in S^*$:
\begin{align}
\zj[1]{\yp}{*}\backslash \zj[1]{\ypp}{*} & > \zj[1]{y}{*} > \hdots\nonumber\\
 & \vdots \nonumber\\
\zj[j^*]{\yp}{*} & > \zj[j^*]{y}{*} >\zj[j^*]{\ypp}{*} \hdots\nonumber\\
\zj[j^*+1]{y}{*} & > \zj[j^*+1]{\yp}{*}\backslash \zj[j^*+1]{\ypp}{*} > \hdots\nonumber\\
 & \vdots \nonumber\\
\zj[m]{y}{*} & > \zj[m]{\yp}{*}\backslash \zj[m]{\ypp}{*} > \hdots\label{profile_4}
%
\end{align}
where $\z[j]{a}\backslash \z[j]{b}$ denotes any of $\z[j]{a} > \z[j]{b}$, $\z[j]{a} < \z[j]{b}$ or $\z[j]{a} = \z[j]{b}$.  Note that for any $z^*\in S^*$ we have $\zj[j]{y}{*}>\zj[j]{\yp}{*}$ if and only if $\zj[j]{y}{j^*}>\zj[j]{\yp}{j^*}$ for all $j$ and so by Lemma \ref{mcr_same_choice} we have $y^{\prime}\in \C{z^*}$.  Moreover, for any $z^*\in S^*$ we have that $\zj[j]{y}{*}>\zj[j]{\ypp}{*}$ if and only if the same holds in \ref{profile_3} for all $j$ and so Lemma \ref{mcr_same_choice} also implies that $\{y\}=\C[\{y,y^{\prime\prime}\}]{z^*}$.  Thus by Lemma \ref{transitivity} we have $\{y^{\prime}\}=\C[\{y^{\prime},y^{\prime\prime}\}]{z^*}$.  But now note that for any $z$ such that $\zj[j^*]{\yp}{}> \zj[j^*]{\ypp}{}$ there exists $z^{*}\in S^*$ such that $\zj[j]{\yp}{}>\zj[j]{\ypp}{}$ if and only if $\zj[j]{\yp}{*}>\zj[j]{\ypp}{*}$.  By Lemma \ref{mcr_same_choice} we have $\{y^{\prime}\}=\C[\{y^{\prime},y^{\prime\prime}\}]{z}$.  We conclude that $j^*$ is decisive for $y^{\prime}$ over $y^{\prime\prime}$.

\noindent\textbf{Step 3: for all $\mathbf{y^{\prime\prime}\neq y,y^{\prime}}$ we have $\mathbf{j^*}$ decisive for $\mathbf{y}$ over $\mathbf{y^{\prime\prime}}$}.  Consider the set of ensemble scores $S^*\subseteq S(Y)$ such that for all $z^*\in S^*$:
\begin{align}
\zj[1]{y}{*}\backslash \zj[1]{\ypp}{*} & > \zj[1]{\yp}{*} > \hdots\nonumber\\
 & \vdots \nonumber\\
\zj[j^*]{y}{*} & > \zj[j^*]{\yp}{*} > \zj[j^*]{\ypp}{*} > \hdots\nonumber\\
\zj[j^*+1]{y}{*}\backslash \zj[j^*+1]{\ypp}{*} & > \zj[j^*+1]{\yp}{*} > \hdots\nonumber\\
 & \vdots\nonumber \\
\zj[m]{y}{*}\backslash \zj[m]{\ypp}{*} & > \zj[m]{\yp}{*} > \hdots\label{profile_5}
\end{align}
By step (2) we have that $j^*$ is decisive for $y^{\prime}$ over $y^{\prime\prime}$ and so $\{y^{\prime}\}=\C[\{y^{\prime},y^{\prime\prime}\}]{z^*}$.  By ensemble unanimity we have $\{y\}=\C{z^*}$.  By Lemma \ref{transitivity} we have $\{y\}=\C[\{y,y^{\prime\prime}\}]{z^*}$.  But now note that for any $z$ such that $\zj[j^*]{y}{} > \zj[j^*]{\ypp}{}$ there exists $z^{*}\in S^*$ such that $\zj[j]{y}{}>\zj[j]{\ypp}{}$ if and only if $\zj[j]{y}{*} >\zj[j]{\ypp}{*}$ for all $j$ and so by Lemma \ref{mcr_same_choice} we have that $\{y\}=\C[\{y,y^{\prime\prime}\}]{z}$.  We conclude that $j^*$ is decisive for $y$ over $y^{\prime\prime}$.

\noindent\textbf{Step 4: for all $\mathbf{y^{\prime\prime}\neq y,y^{\prime}}$ we have $\mathbf{j^*}$ decisive for $\mathbf{y^{\prime\prime}}$ over $\mathbf{y}$.}  Consider $s$ such that:
\begin{align}
\zj[1]{\yp}{} & > \zj[1]{\ypp}{} > \zj[1]{y}{} \hdots\nonumber\\
 & \vdots \nonumber\\
\zj[j^*]{\ypp}{} & > \zj[j^*]{y}{} > \zj[j^*]{\yp}{}\hdots\nonumber\\
 & \vdots \nonumber\\
\zj[m]{\ypp}{} & > \zj[m]{y}{} > \zj[m]{\yp}{} \hdots\label{profile_6}
\end{align}
Note that $\zj[j]{y}{} > \zj[j]{\yp}{}$ iff $\zj[j]{y}{j^*-1} >\zj[j]{y}{j^*-1}$ for all $j$ and so by Lemma \ref{mcr_same_choice} we have $y\in \C{z}$.  By ensemble unanimity $\{y^{\prime\prime}\}=\C[\{y,y^{\prime\prime}\}]{z}$ and so by Lemma \ref{transitivity} we have $\{y^{\prime\prime}\}=\C[\{y^{\prime},y^{\prime\prime}\}]{z}$.  Consider the set of ensemble scores $S^*\subseteq S(Y)$ such that for all $z^*\in S^*$:
\begin{align}
\zj[1]{\yp}{*} & > \zj[1]{y}{*} \backslash \zj[1]{\ypp}{*} > \hdots\nonumber\\
 & \vdots \nonumber\\
\zj[j^*]{\ypp}{*} & > \zj[j^*]{\yp}{*} > \zj[j^*]{y}{*} \geq \hdots\nonumber\\
\zj[j^*+1]{y}{*} \backslash \zj[j^*+1]{\ypp}{*} & > \zj[j^*+1]{\yp}{*} > \hdots\nonumber\\
 & \vdots\nonumber \\
\zj[m]{y}{*} \backslash \zj[m]{\ypp}{*} & > \zj[m]{\yp}{*} > \hdots\nonumber
\end{align}
Note that for any $z^*\in S^*$ we have $\zj[j]{y}{*} >\zj[j]{\yp}{*}$ iff $\zj[j]{y}{j^*} >\zj[j]{\yp}{j^*}$ and so by Lemma \ref{mcr_same_choice} we have $y^{\prime}\in\C{z^*}$. Also, $\zj[j]{\yp}{*} > \zj[j]{\ypp}{*}$ iff the same holds in \ref{profile_6} and so by Lemma \ref{mcr_same_choice} we have $\{y^{\prime\prime}\}=\C[\{y^{\prime},y^{\prime\prime}\}]{z^*}$.  By Lemma \ref{transitivity} it follows that $\{y^{\prime\prime}\}=\C[\{y,y^{\prime\prime}\}]{z^*}$.  But now note that for any $z$ such that $\zj[j^*]{\ypp}{} > \zj[j^*]{y}{}$ there exists $z^*\in S^*$ such that $\zj[j]{\ypp}{} >\zj[j]{y}{}$ iff $\zj[j]{\ypp}{*} > \zj[j]{y}{*}$ for all $j$ and so by Lemma \ref{mcr_same_choice} we have $\{\ypp\}=\C[\{y,\ypp\}]{z}$. We conclude that $j^*$ is decisive for $y^{\prime\prime}$ over $y$.

\noindent\textbf{Step 5: for all $\mathbf{y^{\prime\prime}\neq y,y^{\prime}}$ we have $\mathbf{j^*}$ decisive for $\mathbf{y^{\prime\prime}}$ over $\mathbf{y^{\prime}}$.}  Consider the set of ensemble scores $S^*\subseteq S(Y)$ such that for all $z^*\in S^*$:
\begin{align}
\zanysec[1]{*}{y}{\yp}{\ypp} > \hdots\nonumber\\
 & \vdots \nonumber\\
\zline[j^*]{*}{\ypp}{y}{\yp}> \hdots\nonumber\\
\zanysec[j^*+1]{*}{y}{\yp}{\ypp} > \hdots\nonumber\\
 & \vdots \nonumber\\
 \zanysec[m]{*}{y}{\yp}{\ypp} > \hdots\label{profile_8}
\end{align}
By step (4) we have that $j^*$ is decisive for $y^{\prime\prime}$ over $y$, and so $\{\ypp\}=\C[\{y,\ypp\}]{z}$.  By ensemble unanimity $\{y\}=\C[\{y,\yp\}]{z^*}$.  By Lemma \ref{transitivity} we have $\{\ypp\}=\C[\{\yp,\ypp\}]{z^*}$. But now note that for any $z$ such that $\zj[j^*]{\ypp}{} > \zj[j^*]{\yp}{}$ there exists $z^{*}\in S^*$ such that $\zj[j]{\ypp}{} > \zj[j]{\yp}{}$ iff $\zj[j]{\ypp}{*} > \zj[j]{\yp}{*}$ for all $j$ and so by Lemma \ref{mcr_same_choice} we have that $\{\ypp\}=\C[\{\yp,\ypp\}]{z}$. We conclude that $j^*$ is decisive for $y^{\prime\prime}$ over $y^{\prime}$.

\noindent\textbf{Step 6: $\mathbf{j^*}$ decisive for $\mathbf{y}$ over $\mathbf{y^{\prime}}$ and for $\mathbf{y^{\prime}}$ over $\mathbf{y}$.}  Consider the set of ensemble scores $S^*\subseteq S(Y)$ such that for all $s^*\in S^*$:
\begin{align}
\zj[1]{y}{*} \backslash \zj[1]{\yp}{*} & >  \hdots\nonumber\\
 & \vdots \nonumber\\
\zj[j^*]{y}{*} & > \zj[j^*]{\ypp}{*} > \zj[j^*]{\yp}{*} > \hdots\nonumber\\
\zj[j^*+1]{y}{*} \backslash \zj[j^*+1]{\yp}{*} & > \hdots\nonumber\\
 & \vdots\nonumber \\
\zj[m]{y}{*} \backslash \zj[m]{\yp}{*} & > \hdots\label{profile_9}
\end{align}
By step (5) we have that $j^*$ is decisive for $y^{\prime\prime}$ over $y^{\prime}$ and so $\{\ypp\}=\C[\{\yp,\ypp\}]{z^*}$.  By step (3) we have that $j^*$ is decisive for $y$ over $y^{\prime\prime}$ and so $\{y\}=\C[\{y,\ypp\}]{z^*}$.  By Lemma \ref{transitivity} we have $\{y\}=\C{z^*}$.  But now note that for any $z$ such that $\zj[j^*]{y}{} > \zj[j^*]{\yp}{}$ there exists $z^*\in S^*$ such that $\zj[j]{y}{} > \zj[j]{\yp}{}$ iff $\zj[j]{y}{*} > \zj[j]{\yp}{*}$ and so by Lemma \ref{mcr_same_choice} we have $\{y\}=\C{z}$.  We conclude that $j^*$ is decisive for $y$ over $y^{\prime}$.  The argument for $j^*$ decisive for $y^{\prime}$ over $y$ is the same with the two reversed, citing step (4) and step (2).

\noindent\textbf{Step 7: $\mathbf{j^*}$ is decisive for any $a\in Y$ over any $b\in Y$.} Take any $a\neq y,y^{\prime}$ and any $b\neq y,y^{\prime}$ and consider the set of scores $S^*\subseteq S(Y)$ such that for all $z^*\in S^*$:
\begin{align}
\zj[1]{a}{*} \backslash \zj[1]{b}{*} & > \zj[1]{y}{*} \backslash \zj[1]{\yp}{*} \hdots\nonumber\\
 & \vdots \nonumber\\
\zj[j^*]{a}{*} & > \zj[j^*]{y}{*} > \zj[j^*]{\yp}{*} > \zj[j^*]{b}{*} \hdots\nonumber\\
 & \vdots \nonumber\\
\zj[m]{a}{*} \backslash \zj[m]{b}{*} & > \zj[m]{y}{*} \backslash \zj[m]{\yp}{*} \hdots\label{profile_10}
\end{align}
By step (4) we have that $j^*$ is decisive for $a$ over $y$, by step (6) we have that $j^*$ decisive for $y$ over $y^{\prime}$, and by step (2) we have that $j$ is decisive for $y^{\prime}$ over $b$.  Thus by Lemma \ref{transitivity} we have $\{a\}=\C[\{a,b\}]{z^*}$.  But now note that for any $z$ such that $\zj[j^*]{a}{} > \zj[j^*]{b}{}$ there exists $z^{*}\in S^*$ such that $\zj[j]{a}{} > \zj[j]{b}{}$ iff $\zj[j]{a}{*} > \zj[j]{b}{*}$ for all $j$, and so by Lemma \ref{mcr_same_choice} we have $\{a\}=\C[\{a,b\}]{z}$. Thus $j^*$ decisive for $a$ over $b$ and since $a$ and $b$ were arbitrary, there exists $j$, namely $j^*$, such that for all $y,y^{\prime}\in Y$ if $\zj[j^*]{y}{} > \zj[j^*]{\yp}{}$ then $\{y\}=\C{z}$.  Thus definition \ref{NECA} is violated, a contradiction.  This establishes (1).

By the forgoing, we have established that any $\Ce$ that satisfies definition \ref{NECA} at least one of definitions \ref{ICDC}, \ref{score_unanimity}, or \ref{CR}.  Thus, if $\Ce$ that satisfies definition \ref{NECA} there exists $S^*\subseteq S(Y)$ such that at least one of definitions \ref{ICDC}, \ref{score_unanimity}, or \ref{CR} are violated on $S^*$.  By $s(\cdot)$ surjective, for all $s^*\in S^*$ there exists $x\in X$ such that $s(x)=s^*$.  Thus, define:
\begin{align*}
X^*\equiv\bigcup_{s^*\in S^*}\left\{x\in X|s(x) = s^*\right\}.
\end{align*}
This establishes (2).

\end{proof}

Given a strict ordering $P\in\mathbf{R}$ define:
\begin{align*}
\overline{W}(Y,P) = \left\{z\in\mathbb{R}^{|Y|}\left|\mbox{for all }y,y^{\prime}\in Y \atop{y\hspace{1mm} P\hspace{1mm}y^{\prime}\mbox{ iff }z_{d(y)}> z_{d(y^{\prime})}}\right.\right\}.
\end{align*}
\begin{lemma}Consider a restriction on $S(Y)$ such that for any $z\in S(Y)$ there exists $P\in\mathbf{R}$ such that $z_j\in \overline{W}(Y,P)$ for all $j$.  Then if $\Ce$ satisfies ensemble unanimity it respects model choice reversal.\label{lemma:unanimity_MCR}
\end{lemma}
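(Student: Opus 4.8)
The plan is to exploit the fact that the restriction forces every score profile to be \emph{pairwise unanimous}: since every model's score vector lies in $\overline{W}(Y,P)$ for one common strict order $P$, for any two labels $y,\yp$ either $z_{j,d(y)}>z_{j,d(\yp)}$ holds for every $j$ or $z_{j,d(y)}<z_{j,d(\yp)}$ holds for every $j$. Under this structure, ensemble unanimity (Definition \ref{score_unanimity}) completely pins down $\C{z}$ on every pair, and this is exactly what rules out the antecedent of model choice reversal except in the intended case. So the first thing I would record is this dichotomy, obtained from completeness of $P$ together with the strictness built into $\overline{W}$ (which excludes ties).

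Concretely, I would assume $\Ce$ satisfies ensemble unanimity and take arbitrary $y,\yp\in Y^*\subseteq Y$ and $z,\zpr\in S(Y)$ witnessing the hypothesis of Definition \ref{CR}, namely $y\in\C{z}$, $\yp\notin\C{z}$, and $\yp\in\C{\zpr}$. Since $\C{z}$ is by definition a nonempty subset of $\{y,\yp\}$ that omits $\yp$, we must have $\C{z}=\{y\}$. Next I would invoke the restriction at $z$ to obtain a strict order $P$ with $z_j\in\overline{W}(Y,P)$ for all $j$; by completeness of $P$, either $y\,P\,\yp$ or $\yp\,P\,y$. If $\yp\,P\,y$ then $z_{j,d(\yp)}>z_{j,d(y)}$ for all $j$, so ensemble unanimity gives $\C{z}=\{\yp\}$, contradicting $\C{z}=\{y\}$; hence $y\,P\,\yp$, that is $z_{j,d(y)}>z_{j,d(\yp)}$ for every $j$.

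Then I would run the symmetric argument at $\zpr$: the restriction yields a strict order $P^{\prime}$ with $\zpr_j\in\overline{W}(Y,P^{\prime})$ for all $j$, and completeness gives $y\,P^{\prime}\,\yp$ or $\yp\,P^{\prime}\,y$. If $y\,P^{\prime}\,\yp$ then $\zpr_{j,d(y)}>\zpr_{j,d(\yp)}$ for all $j$, so ensemble unanimity forces $\C{\zpr}=\{y\}$, contradicting $\yp\in\C{\zpr}$; hence $\yp\,P^{\prime}\,y$, i.e. $\zpr_{j,d(y)}<\zpr_{j,d(\yp)}$ for every $j$. Combining the two conclusions, every $j$ — and in particular some $j$ — satisfies $z_{j,d(y)}>z_{j,d(\yp)}$ together with $\zpr_{j,d(y)}<\zpr_{j,d(\yp)}$, which is precisely the conclusion required by Definition \ref{CR}, so model choice reversal holds.

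I do not expect a genuine obstacle here; the argument is short. The two points that need care are (i) using the definition of an ensemble choice aggregator to collapse $\yp\notin\C{z}$ to $\C{z}=\{y\}$, and (ii) using completeness of the common order $P$ (plus the strictness of $\overline{W}$) to get the clean dichotomy between the two unanimity cases rather than having to reason about indifference. The conceptual content is simply that when all models are constrained to agree on a single strict ranking, ensemble unanimity is upgraded to full pairwise decisiveness, leaving model choice reversal no room to fail.
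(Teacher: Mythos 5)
Your proof is correct and takes essentially the same route as the paper's: the common strict order at each score profile upgrades the situation to full pairwise agreement, so ensemble unanimity forces $z_{j,d(y)}>z_{j,d(\yp)}$ for all $j$ and $\zpr_{j,d(y)}<\zpr_{j,d(\yp)}$ for all $j$, which yields the required reversing model. The only cosmetic difference is that the paper frames the second half as a contradiction with the supposed absence of a reversing model, whereas you derive $\yp\,P^{\prime}\,y$ directly; the logical content is identical.
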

\begin{proof}Assume that the hypotheses of the lemma hold and consider any $\Ce$, any $z,\zpr\in S(Y)$, and any $y,\yp\in Y$ such that $y = \C{z}$ and $\yp\in\C{\zpr}$.  By assumption, there exists $P,P^{\prime}\in\mathbf{R}$ such that for all $j$ we have $z_j\in \overline{W}(Y,P)$ and $\zpr\in \overline{W}(Y,P^{\prime})$.  Thus, it must be that for all $j$ we have $z_{j,d(y)}>z_{j,d(\yp)}$, else by unanimity $\yp \in \C{z}$, a contradiction.  Suppose that there does not exist $j$ such that $z_{j,d(y)}>z_{j,d(\yp)}$ and $\zpr_{j,d(y)}<\zpr_{j,d(\yp)}$.  Then $\zpr\in \overline{W}(Y,P^{\prime})$ implies that $\zpr_{j,d(y)}>\zpr_{j,d(\yp)}$ for all $j$.  By ensemble unanimity $y=\C{\zpr}$, a contradiction.
\end{proof}

\noindent \textbf{Proof of theorem \ref{thm:consistency}.}
\begin{proof}Fix a finite set of label values $Y = \{y_1,y_2,\hdots\}$ and a true conditional distribution $p(Y=y|X=x)$.  Generically, for any $x\in X$, there exists a strict ordering $P_x\in\mathbf{R}$ so that $\{p(y|x)\}_{y\in Y}\in\overline{W}(Y,P_x)$.  By $s_j(x|D_n)$ fully consistent it follows that $s_j(x)\in\overline{W}(Y,P_x)$ for all $j$ and all $x$ in the limit as $n\to\infty$ with probability one.  Now, consider the soft voting ensemble choice aggregator:
\begin{align*}
C(Y,s(x)) = \argmax_{y\in Y}\left\{\frac{1}{m}\sum_{j}s_j(x,y)\right\}.
\end{align*}
Clearly this non-degenerate and satisfies ensemble unanimity.  It is also transitive and so by Lemma \ref{DC_IC_transitivity} satisfies insertion/deletion consistency.  Finally, by Lemma \ref{lemma:unanimity_MCR} it also satisfies model choice reversal for all $x\in X$ with probability one.
\end{proof}


\end{document}